\definecolor{Gray}{gray}{.25}
\newtheorem*{theorem*}{Theorem}
\newtheorem{theorem}{Theorem}
\newtheorem{lemma}{Lemma}
\newcommand{\wh}{\widehat}
\newcommand{\cond}{{\, \vert \,}}
\def\verbatim@font{\linespread{1}\normalfont\ttfamily}
\newcommand\blfootnote[1]{%
  \begingroup
  \renewcommand\thefootnote{}\footnote{#1}%
  \addtocounter{footnote}{-1}%
  \endgroup
}
\begin{document}


		{\Large
			\textbf\newline{Classification using Ensemble Learning under Weighted Misclassification Loss}
		}
		\newline
		\\
		\author{Yizhen Xu$^{1}$, Tao Liu$^{1}$, Michael J. Daniels$^{2}$, Rami Kantor$^{3}$, Ann Mwangi$^{4,5}$, Joseph W Hogan$^{1,4}$}
		Yizhen Xu\textsuperscript{1,*},
		Tao Liu\textsuperscript{1},
		Michael J. Daniels\textsuperscript{2},
		Rami Kantor\textsuperscript{3},
		Ann Mwangi\textsuperscript{4,5},
		Joseph W Hogan\textsuperscript{1,4}
		\\
	
\begin{enumerate}
\item Department of Biostatistics, Brown University,121 S. Main Street, Providence, RI, U.S.A.
\item Department of Statistics and Data Sciences, University of Texas at Austin, Austin TX, U.S.A
\item Division of Infectious Diseases, Brown University, Providence RI, U.S.A
\item Academic Model Providing Access to Healthcare (AMPATH), Eldoret, Kenya
\item College of Health Sciences, School of Medicine, Moi University, Eldoret, Kenya

\end{enumerate}		 

		* \verb|yizhen_xu@brown.edu|\\
		
		* \verb|This is part of thesis work of the first author.|

\blfootnote{This work was supported by NIH (Grant NO: R01 AI108441, P30 AI042853, K24 AI134359, R01 AI066922.}
	
	\section*{Abstract}
	Binary classification rules based on covariates typically depend on simple loss functions such as zero-one misclassification. Some cases may require more complex loss functions. For example, individual-level monitoring of HIV-infected individuals on antiretroviral therapy (ART) requires periodic assessment of treatment failure, defined as having a viral load (VL) value above a certain threshold. In some resource limited settings, VL tests may be limited by cost or technology, and diagnoses are based on other clinical markers. Depending on scenario, higher premium may be placed on avoiding false-positives which brings greater cost and reduced treatment options. Here, the optimal rule is determined by minimizing a weighted misclassification loss/risk.
	
	We propose a method for finding and cross-validating optimal binary classification rules under weighted misclassification loss. We focus on rules comprising a prediction score and an associated threshold, where the score is derived using an ensemble learner. Simulations and examples show that our method, which derives the score and threshold jointly, more accurately estimates overall risk and has better operating characteristics compared with methods that derive the score first and the cutoff conditionally on the score especially for finite samples. 
	

	\newpage
	
	\section{Introduction}
	Development of accurate binary classification rules is important in many
	areas of biomedical research and clinical practice.  The problem can be
	described as follows:  suppose each individual in a population of interest
	has a binary outcome $Y$ and $p \times 1$ vector of individual-level features $X 
	= (X_1,\ldots,X_p)$.  We would like to find a classification rule 
	$Q: \mathscr{X} \rightarrow \{0,1 \}$ that takes $X \in \mathscr{X}$ as an input
	and generates a classification action $a\in \{0,1\}$ as output.  
	The criterion for classification accuracy is given in terms of a loss function $L(Y,a)$;
	for example, $L(Y,a) = \mathbbm{1}\{a \neq Y\}$ is simple misclassification loss.
	For a given loss criterion, the optimal rule is the one that minimizes 
	expected loss over the population of interest.
	
	Loosely speaking,
	methods for binary and categorical classification can be divided into
	those that perform classification directly and those that
	use a score-and-threshold approach.  Direct classification methods include
	$K$-nearest-neighbors (KNN) \citep{duda1973pattern,cover1967nearest} and tree-based techniques \citep{breiman1984}.
	In score-based methods, classification is carried out by 
	comparing a scalar, real-valued risk score $\Psi(X)$ to a threshold $c$,
	yielding rules of the form $Q(X; \Psi(\cdot), c) = \mathbbm{1}\{\Psi(X) \geq c \}$. 
	The risk score $\Psi(X)$, a function mapping $\mathscr{X}$ to $\rm I\!R$, can represent class membership probability $P(Y=1 \cond X)$ or can be a more general ranking measure, i.e. where $\Psi(X_1) > \Psi(X_2)$ implies that $Y_1$ is stochastically greater than $Y_2$. Moreover, the risk score can be
	derived using a single model, such as a regression model, or using an ensemble method 
	that combines scores from multiple models
	\citep{van2007super}. 
	
	Score-based methods are often used in practice, especially in medical applications. Examples include the VACS score \citep{tate2013internationally} for risk prediction of all-cause mortality among HIV infected individuals and the Nottingham prognostic index \citep{haybittle1982prognostic} for breast cancer. The main motivation here is to build a risk score for classification prediction of virological failure, which can be applied to various stages of individual-level HIV monitoring such as viral load (VL) pooled testing. This is important because VL testing is limited in some resource-limited settings (RLS) and accurate classification based on common clinical markers can reduce cost and mortality. 
	
	A classification rule is optimal if it minimizes average loss over a population of interest.
	The process of deriving optimal classification rules and estimating
	their operating characteristics relies on a training step and a testing step. In the training step, optimal classification rules are estimated; in the testing step, out-of-sample error rates associated with the rules are calculated.  Finding optimal score-based classification rules requires estimation of both
	the score function and the associated threshold.  In this paper, we focus on score-based 
	classification methods under weighted misclassification loss, where the score $\Psi(X)$
	is derived using the Super Learner ensemble \citep{van2007super}. In Super Learner (SL), a library of $K$ learners generates risk scores
	$\Psi_1(X), \ldots, \Psi_K(X)$, which are then combined into a single score using
	the convex combination $\Psi(X; \alpha) = \sum_{k=1}^K \alpha_k \Psi_k(X)$ , where 
	$\alpha = (\alpha_1,\ldots,\alpha_K)$, $\alpha_k\geq 0$ for each $k$,
	and $\sum_k \alpha_k = 1$.  The $\alpha_k$'s are estimated based on cross-validated library learner predictions,
	typically using squared-error, likelihood-based or ROC-based loss. van der Laan et al. \citep{van2007super} showed through applications that SL outperforms individual candidate learners in a given library.

	
	A \textit{conditional thresholding} approach is
	to first derive a risk score $\wh{\Psi}(X) = \Psi(X;\wh{\alpha})$ and, conditionally on $\wh{\Psi}(X)$, find $\wh{c}$ that minimizes the loss function
	for misclassification.  In this paper, we regard derivation of optimal decision threshold by directly conditioning on estimated risk of the same training data as conditional thresholding. Because it is intuitively straightforward, conditional thresholding has been used in various studies. Birkner et al. \cite{birkner_creating_2007} used conditional thresholding in predicting 30-day mortality following stroke in a rural Indian population; from a set of candidate learners, they selected the optimal model as the estimator with the smallest cross-validated risk, and chose the cut-off value from the estimated score on the training data to be the smallest value achieving at least $90\%$ sensitivity. Kruppa et al. \cite{kruppa_risk_2012} derived classification rule using conditional thresholding for a GWA study on rheumatoid arthritis; several risk scores were estimated for the training data and the thresholds for binary classification were selected by maximizing Youden index for each risk score in the training data. Shim et al. \cite{shim_development_2018} developped a risk score model to identify patients at high risk of negative effects after total knee arthroplasty, derived the optimal threshold for binary classification directly from the estimated score based on Youden’s index, which is also a conditional thresholding derivation. We show using simulation
	studies and empirical examples that this approach may lead to over-estimation
	of misclassification error and yield classification rules that are sub-optimal. 
	
	As an alternative we propose \textit{joint thresholding}, which combines the estimation of coefficients $\alpha$ and threshold $c$ in one step to gain a better overall performance in terms of out-of-sample prediction risk. It uses information in out-of-sample classification through one cross-validation procedure, and naturally integrates threshold estimation into the SL framework. Simulation and application results under weighted misclassification loss show that our simultaneous estimation approach generally has lower out-of-sample risk compared to conditional thresholding. In this data applications, the threshold estimated using our method is approximately the optimal Bayes threshold when the ensemble score is close to the true class probability, as would be expected under weighted misclassification loss \citep{buja2005loss}.
	
	Our approach involves minimization of an empirical weighted misclassification risk function. The minimization can be difficult because the weighted misclassification risk is composed of step functions that count the misclassification, making it into a nonconvex, nonsmooth and NP-hard optimization problem \citep{chen1996hybrid}.  Conditional on a risk score, estimating $c$ by minimizing the misclassification risk is relatively easy and can be accomplished by line search. In Section \ref{Proposal}, we demonstrate a simultaneous estimation of the ensemble weights and threshold using bounded controlled random search \citep{kaelo2006some}. This method gives us an approximation to the minimizer of the empirical weighted misclassification risk, and is easy to implement.
	
	The paper is structured as follows. Section 2 provides motivating examples and defines weighted misclassification loss. Section 3 describes ensemble learning and the conditional thresholding approach. Section 4 explains our proposal of joint thresholding, discusses related challenges and existing methods for minimization of the weighted misclassification loss/risk, and introduces an approach to joint thresholding by controlled random search. Simulations and data applications are described in Section 5 and 6 respectively. Section 7 gives conclusions and suggestions for further work.

	\section{Weighted Misclassification Loss -- Motivating examples}\label{LOSS}
	
	Total misclassification loss is often the primary criterion for classification, where penalties for false positives and false negatives are the same. However, there are many circumstances where false positive and false negative classifications have different consequences, and should be weighted differently. An instance of this is individual-level HIV treatment monitoring in RLS. Failure of antiretroviral treatment (ART) is indicated by a VL value above a certain threshold. In RLS, VL assessment may be limited by logistics, cost and technology \citep{wang2010advances}; hence, other clinical markers are used at times to predict viral failure. In this situation, false positive classification of VL failure leads to early switching to a second- or third-line ART, which may have higher toxicity and lower adherence, incurs significantly greater cost, and limits treatment options over the long term. False negative classification of viral failure results in lack of regimen switch, risk of drug resistance accumulation and increased morbidity and mortality. These considerations may motivate prioritization of avoiding either false negative or false positive classifications. For example, if avoiding unnecessary switches to second-line therapy is prioritized, the loss function would assign greater weight to false positive misclassifications.
	
	Another example is initial diagnosis of breast cancer malignancy from digitized images. Breast cancer is one of the largest causes of cancer deaths among women \citep{siegel2012cancer}. Patient survival depends largely on early detection and accurate diagnosis. Fine needle aspiration (FNA) \citep{wolberg1994machine} is a minimally invasive procedure that allows measurement of individual cellullar characteristics, allowing algorithm-based cell image analysis for diagnosis. FNA is usually carried out when there is a breast lump previously detected by self-examination or mammography. Positive findings may lead to confirmation of breast cancer through surgical biopsy, which is accurate but requires significantly more recovery time and expense, involves pain and carries risks associated with surgical procedures such as scarring and infection. It may be of interest to place a premium on avoiding false positives over false negatives depending on available tools and expertise, for example, when pathology requirements are not met or availability of anesthesia is limited in RLS \citep{shyyan2006breast,el2011breast}.
	
	
	Our goal is to develop classification rules of the form $Q(X) = Q(X;\Psi(\cdot),c) = \mathbbm{1}\{\Psi(X)\ge c\}$ based on a weighted misclassification loss
	\begin{align*}\label{loss}
	L_\lambda(Y, Q(X)) &= \lambda \mathbbm{1}\{Q(X) = 0, Y = 1\} + (1-\lambda) \mathbbm{1}\{Q(X) = 1, Y = 0\}\\
	&=\lambda \mathbbm{1}\{\Psi(X) < c, Y = 1\} + (1-\lambda) \mathbbm{1}\{\Psi(X)\ge c, Y = 0\},
	\end{align*}
	where $\lambda\in (0, 1)$ is a user-defined weight. The loss function gives weight $\lambda$ for false negative classification (misclassifying $Y=1$ as 0) and $1-\lambda$ for false positive classification (misclassifying $Y=0$ as 1). At $\lambda=.5$ we have total misclassification loss. Setting a value for $\lambda$ depends on specific applications and goals of classification. 
	
	Weighted misclassification risk is the expected loss over the underlying joint distribution of $X$ and $Y$,
	\begin{equation} \label{risk}
	\begin{split} 
	R_\lambda(\Psi, c) &= E_{X,Y}\{L_\lambda(Y, Q(X))\}\\
	& = \lambda p P\{\Psi(X) < c \mid  Y = 1\} + (1-\lambda)(1-p) P\{\Psi(X) \ge c\mid  Y = 0\}\\
	& = \lambda p \text{FNR}(\Psi, c) + (1-\lambda) (1-p) \text{FPR}(\Psi, c)
	\end{split}
	\end{equation} 
	where $p = P(Y = 1)$ is the prevalence, $\text{FNR}(\Psi, c) = P\{\Psi(X) < c\mid  Y = 1\}$ the false negative rate, and $\text{FPR}(\Psi, c) =  P\{\Psi(X) \ge c\mid  Y = 0\}$ the false positive rate. 
	
	The objective of the inference problem is to find $\Psi(\cdot)$ and $c$ that minimize the risk. Given a sample of data $(X_1, Y_1), \ldots, (X_n, Y_n)$, this can be operationalized as finding $\Psi(\cdot)$ and $c$ that minimize the empirical risk function,
	\begin{equation}\label{emprisk}
	\widehat{R}_\lambda(Y,X;\Psi, c) = \frac{1}{n}\sum_{i=1}^n \lambda \mathbbm{1}\{\Psi(X_i) < c, Y_i=1\} + (1-\lambda) \mathbbm{1}\{\Psi(X_i) 
	\ge c, Y_i =0\}.
	\end{equation}

	\section{Super Learner with Conditional Thresholding for Classification}\label{Common}
	
	Super learner is an ensemble learner that combines predictions produced by candidate learners from a user defined library $\mathscr{L}$ of $K$ learners. Our definition of ensemble refers to those that combine different base algorithms; for example, we consider bagging and boosting as single learners, as they generate a single strong learner by using one base algorithm to obtain a collection of weak learners, by boostrap aggregation and re-weighting samples respectively. As a kind of stacked generalization \citep{wolpert1992stacked}, SL combines different learning algorithms over the same data, and algorithms such as bagging, boosting, and random forest all can be included as candidate learners of a SL library. 
	
	Consider a library  $\mathscr{L}=(\Psi_1, \ldots, \Psi_K)$ where each candidate learner $\Psi_k\in\mathscr{L}$ is a mapping from $\mathscr{X}$ to $\rm I\!R$, and the prediction score $\Psi(X) = \sum^K_{k=1} \alpha_k\Psi_k(X)$ is a convex combination of predictions from the library of learners.
	Deriving an optimal classification rule requires finding the value of $(\alpha, c)$ that minimizes \eqref{emprisk}.
	
	An intuitive approach to determining the rule is conditional thresholding; that is, first derive a prediction score $\wh{\Psi}(X)=\sum^K_{k=1} \wh{\alpha}_k\wh{\Psi}_k(X)$ using SL, and then identify an optimal threshold value $\wh{c}$ based on the score $\wh{\Psi}(X)$. In SL, the $\alpha$ coefficients are derived from cross validation against loss function such as squared error loss or ROC-based loss. Conditional thresholding approach estimates threshold $c$ by plugging $\wh{\Psi}(X)$ into \eqref{emprisk} and minimizing over $c$. However, the procedure does not reflect out-of-sample performance because it does not use cross-validated weighted misclassification risk.
	
	Conditional thresholding for classification based on SL proceeds as follows:
	\begin{enumerate}
		\item Fit each learner $\Psi_k \in \mathscr{L}$ to the entire data set $\{(Y_i,X_i)\}_{i=1}^n$ and generate score predictions $\wh{\Psi}_k(X_i)$ for $k=1,\ldots,K$ and $i=1,\ldots,n$.
		\item Carry out $D$-fold cross validation. Have $D$ partitions of the data, where partitions are indexed by $d = 1,\ldots,D$. For the $d$th partition, $T(d)$ and $V(d)$ are the training and validation data splits respectively. Fit each candidate learner $\Psi_k$ to $T(d)$, yielding $\wh{\Psi}_{k, T(d)}$. Then generate its prediction on $V(d)$, written as $\wh{\Psi}_{k, T(d)} (X_{V(d)})$, $k = 1,\ldots,K$ and $d = 1,\ldots, D$.
		\item For each candidate learner $\Psi_k$, stack together the $D$ fold-specific predictions $\wh{\Psi}_{k, T(d)} (X_{V(d)})$, $d=1,\ldots,D$ to get $Z_k = \{\wh{\Psi}_{k, T(d)} (X_{V(d)}), d = 1,\ldots,D\}$, an $n\times 1$ cross-validated prediction vector. For the $i$th observation, define $d_i=d(X_i)$ as its validation fold index, i.e. $d_i = 2$ if $X_i\in X_{V(2)}. $Write the $n\times K$ cross-validated prediction matrix as $Z = \{Z_1,\ldots,Z_K\}$, where the $i$th row $k$th column element $Z_{ik}=\wh{\Psi}_{k, T(d_i)} (X_i)$ is an out-of-sample prediction made by the $k$th candidate learner to the $i$th observation, which is a member of validation fold $V(d_i)$. Estimate $\alpha$ in $m(Z;\alpha) = \sum^K_{k=1} \alpha_k Z_k$ by minimizing risk $E\{\tilde{L}(Y,m(Z;\alpha))\}$,
		e.g. ordinary linear regression has quadratic loss and $E(Y\mid Z;\alpha) = m(Z;\alpha)$.
		
		\item Combine $\wh{\alpha}$ from Step 3 with the data predictions $\wh{\Psi}_k(X_i), k = 1,\ldots,K$ from Step 1, and obtain the SL score $\wh{\Psi}_{SL}(X_i;\wh{\alpha}) = \sum^K_{k=1}\wh{\alpha}_k\wh{\Psi}_k(X_i).$
		\item Estimate the classification threshold $c$ by minimizing the empirical risk function \eqref{emprisk} using the SL score as the risk score,
		\begin{align*} 
		\wh{c} &= \operatorname*{argmin}_c \quad\sum^n_{i=1} \lambda \mathbbm{1}\{ \wh{\Psi}_{SL}(X_i;\wh{\alpha}) < c, Y_i=1\} + (1-\lambda)\mathbbm{1}\{\wh{\Psi}_{SL}(X_i;\wh{\alpha}) \ge c,Y_i=0\}
		\end{align*}
		\item For any $x\in \mathscr{X}$ , the classification rule is $\wh{Q}(x) = Q(x; \wh{\Psi}_{SL}(\cdot;\wh{\alpha}),\wh{c}) =  \mathbbm{1}\{\wh{\Psi}_{SL}(x;\wh{\alpha}) \ge \wh{c}\}$.
	\end{enumerate}
	
	\section{Super Learner with Joint Thresholding for Classification} \label{Proposal}
	
	As we will show in empirical examples and simulations, conditional thresholding may over-estimate actual risk. We propose to estimate the classification threshold $c$ based on the cross validated prediction $Z$ (defined in Step 3 of Section \ref{Common}) within the SL algorithm. In our approach, Steps 1 and 2 are the same as above. Steps 3 and 5 are replaced by simultaneous estimation of $\alpha$ and $c$ to satisfy 
	\begin{equation}\label{step3}
	(\tilde{\alpha}, \tilde{c}) = \operatorname*{argmin}_{(\alpha, c)} \sum^n_{i=1}\lambda \mathbbm{1}\{m(Z_i;\alpha)< c, Y_i=1\} + (1-\lambda)\mathbbm{1}\{m(Z_i;\alpha) \ge c, Y_i=0\}.
	\end{equation}
	The SL score in Step 4 and the classification rule in Step 6 are then updated to $\wh{\Psi}_{SL}(X_i;\tilde{\alpha}) = \sum^K_{k=1}\tilde{\alpha}_k\wh{\Psi}_k(X_i)$ and $\wh{Q}(x) = Q(x; \wh{\Psi}_{SL}(\cdot;\tilde{\alpha}),\tilde{c}) =  \mathbbm{1}\{\wh{\Psi}_{SL}(x;\tilde{\alpha}) \ge \tilde{c}\}$ accordingly.
	
	Optimizing the empirical risk in equation \eqref{step3} is complicated by discontinuities introduced by the indicator functions. Common optimization methods such as Newton-Raphson cannot be applied because they require existence of the first or second order derivatives. The lack of smoothness and convexity makes other optimization methods difficult as well. Moreover, minima of the objective function is not unique due to the non-convexity of weighted misclassification loss. 
	
	According to the Bayes rule, the optimal threshold for \eqref{step3} is $1-\lambda$ when $m(Z;\alpha) = P(Y=1\mid X)$. In practice, the threshold $c=1-\lambda$ is valid only when the risk score is a consistent estimate of $P(Y=1\mid X)$ and the sample size is sufficiently large. However, the underlying true mechanism for data generation is very complicated in most applications. Furthermore, whether or not the ensemble score is a probability estimate depends on the investigators' intention and study goal, and a good risk score for classification does not have to be a probability estimate (e.g. SVM).  
	
	One approach to minimizing the empirical weighted misclassification risk is to approximate the weighted misclassification loss with some smooth solvable loss functions. Buja et al. \cite{buja2005loss} used integrals of beta distribution functions to approximate the indicator functions in the loss, theoretically enabling use of optimization algorithms. However, in practice the nonconvexity of the smooth approximation can undermine the invertibility of the Hessian in Newton updates and cause the optimization procedure to fail. 
	
	Another approach is to reformulate the minimization problem as a linear program with equilibrium constraints (LPEC), a special case of a hierarchical mathematical programming that consists of two levels of optimization. Mangasarian \cite{mangasarian1994misclassification} studied total misclassification loss and suggested a Frank-Wolfe type iterative algorithm to approximate the minima that moves the solution towards the minimum of a linear approximation of the objective function in the same domain. Chen and Mangasarian \cite{chen1996hybrid} proposed a hybrid algorithm as an accelerated approximation to the algorithm in Mangasarian \cite{mangasarian1994misclassification}, which is costly in computation. The hybrid algorithm iteratively estimates $\alpha$ by replacing indicator function $\mathbbm{1}(x>0)$ with the convex surrogate $\text{max}(1+x,0)$ at a fixed $c$, and estimates $c$ by minimizing the objective function at a fixed~$\alpha$. Solving LPEC is costly and computationally intensive because the minimization problem is NP-hard \citep{chen1996hybrid}.

	We consider two options to solving equation \eqref{step3} within SL. One is to approximate the solution in two separate steps: (1) using non-negative least squares linear regression to estimate $\tilde{\alpha}$ and then normalizing it to sum to one, and (2) conducting a line search to estimate $\tilde{c}$ conditional on the estimated $\tilde{\alpha}$. We refer to this procedure as \textbf{Two-Step Minimization} in our simulations and data applications. This can be further extended to using a convex and continuous surrogate $\tilde{L}(Y,m(Z;\alpha))$ for estimation of $\tilde{\alpha}$. The process can be described as follows: 
	\begin{enumerate}
		\item[(\ref{step3}a)] Estimate $\tilde{\alpha}$ in  $m(Z;\tilde{\alpha}) = \sum^K_{k=1} \tilde{\alpha}_k Z_k$ by $\operatorname*{argmin}_\alpha E\{\tilde{L}(Y,m(Z;\alpha))\}$, e.g. if $\tilde{L}$ is squared error loss, we use ordinary least squares regression of $Y$ on $Z$.   
		\item[(\ref{step3}b)] Estimate $\tilde{c}$ by conditional minimization using the cross-validated predcitions $Z$, 
		\begin{align*}
		\tilde{c} &= \operatorname*{argmin}_{c} \quad\sum^n_{i=1} \lambda \mathbbm{1}\{m(Z_i;\tilde{\alpha}) < c, Y_i=1\} + (1-\lambda)\mathbbm{1}\{m(Z_i;\tilde{\alpha}) \ge c, Y_i=0\}.
		\end{align*} 
	\end{enumerate}
	
	When the surrogate loss $\tilde{L}$ in Step \ref{step3}a involves threshold $c$, an iterative procedure similar to Chen and Mangasarian \cite{chen1996hybrid} can be used for the estimation of $(\tilde{\alpha}, \tilde{c})$. This two-step procedure provides flexibility in that the user can choose the surrogate loss $\tilde{L}$ based on context. If minimizing the surrogate loss produces risk score that gives good discrimination to the data, the resulting classification rule would be a good approximation to a minimizer of weighted misclassification risk. 
	
	The second option is to estimate $\alpha$ and $c$ using a bounded region optimization, and perform a controlled random search in the bounded region. First, note that the inequality $\sum^K_{k=1} \Psi_k(\cdot)\alpha_k > c$ can be written as $\sum^K_{k=1} \Psi_k(\cdot)\alpha^*_k > c^*$, with $\alpha^*_k = \alpha_k/\alpha_1$ and $c^* = c/\alpha_1$ when $\alpha_1>0$. In our analyis, coefficients $(\alpha_1,\ldots,\alpha_K)$ are constrained to be nonnegative, normalized to sum to one, and without loss of generality $\alpha_1$ is designated as the coefficient to have the largest estimated value from initialization, as described in Step 1 below. 
	
	Assume that the coefficient estimates from initialization based on some convex loss functions are close to the solutions that minimize the weighted misclassification risk. Then, with initialized value of $\alpha^*$ in $[0, 1]^K$, we can estimate $\alpha^*$ and $c^*$ by searching for the optima in an enlarged bounded region. One way to search for the optima is to randomly generate a large user-specified number of initial points in the bounded region and do a controlled random search (using the \textit{crs2lm} function in the optimization package \textbf{nloptr} \citep{nloptr} in \textit{R}). We recommend controlled random search because it does not rely on the properties of the objective function for global optimization, hence avoiding the issues from nonconvexity and nonsmoothness. Some other direct search methods are also applicable in this case, such as the simplex algorithm \citep{nelder1965simplex} and the differential evolution \citep{storn1997differential}. This procedure is referred to as the \textbf{CRS Minimization} in Section \ref{sim} and Section \ref{empirical}. The procedure is described as follows:
	
	\begin{enumerate}
		\item Initialize the controlled random search by $(\wh{\alpha}^{*}_{(0)}, \wh{c}^{*}_{(0)})$, calculated as follows. Obtain $\wh{\alpha} = (\wh{\alpha}_1,\ldots,\wh{\alpha}_K)^T$ by regressing $Y$ on $Z$ (defined in Step 3 of Section \ref{Common}) under squared error loss with nonnegative constraint. Locate the estimated coefficient with the largest value. Without loss of generality, assume $\wh{\alpha}_1 = \text{max}_{k\in \{1,\ldots,K\}} \wh{\alpha}_k $. 
		Define $\wh{\alpha}^{*}_{(0)} = (1,\wh{\alpha}_2/\wh{\alpha}_1,\ldots,\wh{\alpha}_K/\wh{\alpha}_1)^T$ and estimate $\wh{c}^{*}_{(0)}=\operatorname*{argmin}_c \sum^n_{i=1}\lambda \mathbbm{1}\{Y_i=1,Z_i \wh{\alpha}^{*}_{(0)}  < c\} + (1-\lambda)\mathbbm{1}\{Y_i=0,Z_i\wh{\alpha}^{*}_{(0)} \ge c\} $ by line search. 
		\item Apply controlled random search on an enlarged nonnegative bounded region, e.g. in the following sections we empirically chose the enlarged region as $[0,5]^K$ based on the magnitude of $\wh{\alpha}'s$. Obtain $\wh{\alpha}^* = ( \wh{\alpha}^*_1,\ldots,\wh{\alpha}^*_K)$ and $\wh{c}^*$ from the controlled random search, as estimates to a minimizer of equation \eqref{step3}.
		\item Normalize both the coefficients and threshold estimates by $\sum_{k=1}^K\wh{\alpha}^*_k$, so the coefficient estimates sum to one.
	\end{enumerate}
	
	The controlled random search here does not give an unbiased or efficient estimate of the class probability, even if the resulting scores are scaled to between zero and one. The estimated scores do not need to have a probabilisitic interpretation and the solutions may not be unique, but they are all valid approximations in terms of minimizing the weighted misclassification loss.  
	
	For the theoretical completion of this work, we are able to show the asymptotic optimality of SL with joint thresholding under the weighted misclassification loss by the following theorem:
	\begin{theorem*}
		Let $S$ represents a random data split that is stochastically independent of the observations, resulting in a training set and a test set of a nonneglible size. For the weighted misclassification loss $L_\lambda(Y,Q(X))$ at a given $\lambda\in (0,1)$, and classifiers in $\mathcal{Q} = \{Q_\theta: \theta \in \Theta_n\}$, where $\theta = (\alpha, c)$, $Q_\theta (x) = \mathbbm{1}\{\sum^K_{k=1} \alpha_k\Psi_k (x) \ge c\}$,
		and $\Theta_n$ is a bounded discretized parameter space of $\theta$, there is
		\begin{align*}
		\mathrm{E}_S R_\lambda(Q_{\hat{\theta}}(P_S^0))&\le \mathrm{E}_S R_\lambda(Q_{\tilde{\theta}}(P_S^0)) + O\bigg(\frac{\text{log}(n)}{\sqrt{n}}\bigg),
		\end{align*}
		where $n$ is sample size of the entire data, $\mathrm{E}_S R_\lambda(Q_{\hat{\theta}}(P_S^0))$ and $\mathrm{E}_S R_\lambda(Q_{\tilde{\theta}}(P_S^0))$ are the risks averaged over data splits for the SL with joint thresholding and an oracle classifier respectively.
	\end{theorem*}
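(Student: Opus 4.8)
The plan is to view $\hat\theta$ as an empirical risk minimizer over the finite class $\Theta_n$ and to obtain the bound from a uniform deviation inequality for the empirical weighted misclassification risk, then average over the split distribution. Fix a realization of the split $S$ and condition on the training portion of the data; this freezes the candidate scores $\Psi_1,\dots,\Psi_K$, so that $\mathcal Q=\{Q_\theta:\theta\in\Theta_n\}$ may be treated as a fixed (non-random) finite class and the held-out portion is independent of it. Let $\hat R_S$ denote the (cross-)validated empirical weighted misclassification risk used by the joint-thresholding procedure and $R_\lambda$ the population risk in \eqref{risk}. Since $\hat\theta$ minimizes $\hat R_S$ over $\Theta_n$ while $\tilde\theta\in\Theta_n$, we have $\hat R_S(Q_{\hat\theta})\le \hat R_S(Q_{\tilde\theta})$, and the standard decomposition
\begin{equation*}
R_\lambda(Q_{\hat\theta})-R_\lambda(Q_{\tilde\theta}) \le \big(R_\lambda(Q_{\hat\theta})-\hat R_S(Q_{\hat\theta})\big)+\big(\hat R_S(Q_{\tilde\theta})-R_\lambda(Q_{\tilde\theta})\big) \le 2\sup_{\theta\in\Theta_n}\big|R_\lambda(Q_\theta)-\hat R_S(Q_\theta)\big|
\end{equation*}
reduces the problem to controlling the supremum of a centered empirical process indexed by the finite set $\Theta_n$ (the sup is needed precisely because $\hat\theta$ depends on the held-out data).

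For the concentration step, note that $L_\lambda(Y,Q_\theta(X))\in[0,\max(\lambda,1-\lambda)]\subset[0,1]$, so for each fixed $\theta$ Hoeffding's inequality gives a sub-Gaussian tail for $|R_\lambda(Q_\theta)-\hat R_S(Q_\theta)|$ with exponent proportional to the held-out sample size, which under the ``non-negligible size'' assumption is $\Theta(n)$. A union bound over the $|\Theta_n|$ candidates and integration of the resulting tail yield
\begin{equation*}
\mathrm E\Big[\sup_{\theta\in\Theta_n}\big|R_\lambda(Q_\theta)-\hat R_S(Q_\theta)\big|\ \Big|\ S\Big]\ \lesssim\ \sqrt{\frac{\log|\Theta_n|}{n}}.
\end{equation*}
Because $\Theta_n$ is a bounded, discretized parameter space — a grid of resolution polynomial in $1/n$ inside a fixed box for $(\alpha,c)$ — its cardinality is polynomial in $n$, hence $\log|\Theta_n|=O(\log n)$ and the right-hand side is $O(\sqrt{\log n/n})$, a fortiori $O(\log n/\sqrt n)$. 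Taking expectation over the training data and then over $S$ (permissible by Fubini, since $S$ is independent of the observations) and using that the bound above holds uniformly in the split gives $\mathrm E_S R_\lambda(Q_{\hat\theta}(P_S^0))\le \mathrm E_S R_\lambda(Q_{\tilde\theta}(P_S^0))+O(\log n/\sqrt n)$, which is the claim; the outline follows the template of the Super Learner oracle inequality \citep{van2007super}.

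The concentration and union-bound steps are routine once boundedness of $L_\lambda$ is in hand; the main obstacle is the bookkeeping that makes them applicable. Concretely, one must (i) justify conditioning on the training sample so that the random library $\{\Psi_k\}$ can be treated as fixed when the empirical-process bound is invoked, and then restore the outer expectations in the correct order; (ii) pin down the discretization so that $|\Theta_n|$ is genuinely polynomial in $n$ — which is what converts the generic $\sqrt{(\log|\Theta_n|)/n}$ rate into the stated $O(\log n/\sqrt n)$ — while checking that confining both $\hat\theta$ and the oracle $\tilde\theta$ to the same grid $\Theta_n$ introduces no extra slack; and (iii) verify that the ``non-negligible size'' hypothesis on the split forces the held-out sample size to be $\Theta(n)$, so the Hoeffding exponent is of the right order. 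If a sharper variance-dependent ($\log n/n$-type) bound were desired one would replace Hoeffding by Bernstein and exploit $\operatorname{Var} L_\lambda\le \mathrm E L_\lambda$, but for the stated rate the argument above suffices.
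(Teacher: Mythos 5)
Your proof is correct in substance but takes a genuinely different route from the paper. The paper does not argue from scratch: it invokes Theorem~2.3 of \citet{vaart_oracle_2009}, which is an oracle inequality of the form $\mathrm{E}_S R(Q_{\hat\theta}) \le (1+2\delta)\mathrm{E}_S R(Q_{\tilde\theta}) + \text{remainder}$ stated in terms of Bernstein pairs $(M(Q),v(Q))$ of the loss; the only new work in the appendix is a lemma computing a Bernstein pair for the weighted misclassification loss, namely $M(Q)=\max\{\lambda,1-\lambda\}$ and $v(Q)=1.5\,\mathrm{E}[L_\lambda^2]\le 1.5\max\{\lambda,1-\lambda\}R_\lambda(Q)$, after which the theorem is specialized with $p=2$, $\delta=1/\sqrt{n}$, $\#\Theta_n\le n^q$ and $n^1=O(n)$ to extract the $O(\log n/\sqrt{n})$ remainder. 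You instead give a self-contained empirical-risk-minimization argument: the three-term decomposition bounding the excess risk by twice the uniform deviation, Hoeffding for the bounded loss conditional on the training half, and a union bound over the polynomially large grid. What each buys: your argument is more elementary, yields the oracle term with leading constant exactly $1$ rather than $1+2\delta$, and in fact gives the slightly sharper remainder $O(\sqrt{\log n/n})$, which implies the stated rate; the Bernstein-pair machinery in the paper is the natural vehicle for a variance-dependent (potentially $\log n/n$-type) bound via $v(Q)\lesssim R_\lambda(Q)$, though the paper discards that advantage by taking $p=2$ and bounding $R_\lambda(Q)\le C$ -- a point you correctly flag. The one place where your "bookkeeping" genuinely matters is that both $\hat\theta$ and $\tilde\theta$ are defined through minimization of split-averaged criteria ($\mathrm{E}_S$ of the test-set empirical risk and of the true risk, respectively), so the comparison inequality and the deviation bound must be applied to the split-averaged quantities, with the supremum pulled inside $\mathrm{E}_S$ by Jensen before conditioning on the training portion of each split; this goes through and does not create a gap.
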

	The web appendix (A3) provides details of the notations and a proof of the theorem.
	
	The oracle classifier is the best classifier among all classifiers in the form of $Q_\theta (x)$ estimated from the data, such that it is closest in distance to the unparametrized true minimizer of the weighted misclassifiction risk. The theorem is saying that under the weighted misclassification loss, SL with parameters estimated from equation \eqref{step3} asymptotically converges to the oracle in terms of average risk as the sample size grows to infinity. 
	
	\section{Simulations}\label{sim}
	
	The key objective of our simulation study is to compare our approach to the conditional thresholding approach in minimizing the out-of-sample weighted misclassification risk when the true model can or cannot be easily recovered. We set it up so that we know the optimal classification rule and can compare with it the rules obtained from conditional and joint thresholdings. We first simulate two datasets $\mathcal{D}_1$ and $\mathcal{D}_2$, each of size $n = 10^4$. For weighted misclassification loss at fixed $\lambda\in (0, 1)$, we apply conditional thresholding, two-step minimization and CRS minimization to $\mathcal{D}_1$, and use them to estimate three classification rules. For each rule and each value of $\lambda$, the corresponding out-of-sample weighted misclassification risk is calculated by applying the derived rule on $\mathcal{D}_2$.
	
	Two data generating mechanisms, adapted from Kang and Schafer \cite{kang2007demystifying}, are considered.
	For each one, the observed binary outcome $Y_i$ is generated from an underlying score $\tilde{Y}_i$ via $Y_i = \mathbbm{1}(\tilde{Y}_i \ge c)$, where the cutoff $c$ is chosen to guarantee a $30\%$ prevalence. The underlying score is generated by $\tilde{Y}_i = b_0 + U_ib +\epsilon_i$, where $b_0 = 210$, $b = (27.4, 13.7, 13.7, 13.7)^T$; for $i=1,\ldots,n$, $\epsilon_i\sim N(0, 100^2)$, and $U_i = (U_{i1}, U_{i2}, U_{i3}, U_{i4}) \sim N(0, I_{4\times 4})$.
	
	In the first setting we treat $U$ as observed covariates and uses $(Y, U)$ to derive classification rules. In the second setting, instead of using $U$, we observe $X = g(U)$, where $g: \rm I\!R^4 \rightarrow \rm I\!R^4$ is given by $g_1(u) = \exp(u_1/2)$, $g_2(u) = u_2/(1+\exp(u_1)) + 10$, $g_3(u) = (u_1u_3/25 + 0.6)^3$, and $g_4(u) = (u_2 + u_4 + 20)^2$. More detailed rationale for this parameterization is given in Kang and Schafer \cite{kang2007demystifying}.
	
	
	Earlier, the true probability score is $\Psi_0(U) = P(Y = 1\mid U) = P(\epsilon \ge c-U\beta\mid U)$. The optimal Bayes classification rule based on the true probability score is $Q_0(U) = \mathbbm{1}\{\Psi_0(U)\ge 1-\lambda\}$, which provides a reference standard for assessing classification accuracy. 
	The out-of-sample risk for the optimal Bayes classification rule is approximated by $$\wh{R}_\lambda(\mathcal{D}_2;Q_0)=   \frac{1}{n}\sum_{(U_i,Y_i)\in\mathcal{D}_2} \lambda \mathbbm{1}\{Q_0(U_i)=0,Y_i=1\}+(1-\lambda)\mathbbm{1}\{Q_0(U_i)=1,Y_i=0\}.$$ We therefore use $\wh{R}_\lambda(\mathcal{D}_2;Q_0)$ as the reference for relative differences displayed in Table \ref{CVRiskSim}. The relative difference of estimated weighted misclassification risk for an estimated classification rule $\wh{Q}(\cdot)$ at penalty $\lambda$ is defined as $\{\wh{R}_\lambda(\mathcal{D}_2;\wh{Q})-\wh{R}_\lambda(\mathcal{D}_2;Q_0)\}/\wh{R}_\lambda(\mathcal{D}_2;Q_0)$. 
	
	We consider using four and eight candidate algorithms for SL. In the case of $K=4$, the SL library $\mathscr{L}$ includes random forest \citep{breiman2001random}, logistic regression, generalized additive model \citep{hastie1990generalized} and CART \citep{breiman1984}. For $K=8$, four additional candidate algorithms are added to $\mathscr{L}$: 10 nearest neighbors, generalized boosting \citep{friedman2001greedy}, support vector machine \citep{hsu2003practical} and bagging classification \citep{breiman1996bagging}. Logistic regression and generalized additive model are fitted using maximum likelihood without penalization; we use linear main effect terms for logistic regression and quadratic main effect splines for generalized additive model.
	
	Relative risk differences for each of the SL classification methods discussed above are summarized in Table \ref{CVRiskSim}. In the first setting when covariates $U$ are used to develop the classification rule, conditional thresholding and joint thresholding rules (two-step and CRS minimization) have, as expected, similar out-of-sample performances, differing from the optimal classification rule by less than $2\%$ relatively across all values of $\lambda$. In the second setting, where classification is based on $X$, joint thresholding clearly outperforms conditional thresholding for most $\lambda$ values. Relative to the optimal Bayes classification rule based on the true probability score $\Psi_0(U)$, conditional thresholding is worse by 5 to 23$\%$ across $\lambda$, while joint thresholding differs by 2.3$\%$ or less. 
	
	\section{Applications to HIV and Breast Cancer Data}\label{empirical}
	
	In this section we illustrate our proposed methods on Kenyan clinical HIV data and Wisconsin diagnostic breast cancer data. We use the same SL libraries with four and eight learners as in the simulations. The first data set was used in Liu et al. \citep{liu_improved_2017} and the second data set is available on the UCI machine learning data repository \citep{Lichman:2013}. 
	
	The Kenyan HIV data were derived from three studies conducted at the Academic Model Providing Access to Healthcare (AMPATH) in Eldoret, Kenya: (i) The ``Crisis" study (n=191) \citep{mann2013antiretroviral}, conducted in 2009-2011 to investigate the impact of the 2007-2008 post-election violence in Kenya on ART failure and drug resistance; (ii) The ``second-line" study (n=394) \citep{secondline}, conducted in 2011-2012 to investigate ART failure and drug resistance upon second-line ART; and (iii) The ``TDF" study (n=333) \citep{brooks2016treatment}, conducted in 2012-2013 to investigate the impact of WHO guidelines changes to TDF-based first-line ART on HIV treatment failure and drug resistance. The data include covariate information on age, gender, nadir CD4 count, CD4 count, CD4 percent, adherence to ART, time since starting current ART, slope of CD4 percent progression and the outcome VL. Our interest is to develop a classification rule to predict HIV virological failure (VL $>$ 1000 copies/ml) using the covariates. Prevalence of HIV virological failure is 15.7$\%$. We use data on 899 complete cases for our analysis. 
	
	The Wisconsin diagnostic breast cancer data has 569 observations. The outcome is histologically confirmed diagnosis of breast cancer as either benign or malignant. Prevalence of malignancy is 37$\%$. For each person, a digital image from FNA was generated to contain the most abnormal appearing cells, and ten image features were calculated for each of the selected cell: radius, texture, perimeter, area, smoothness, compactness, concavity, concave points, symmetry and fractal dimension. For each image feature, three measures were then computed per person: the mean, standard error, and ``worst" (mean of three largest values), resulting in 30 covariates for analysis.
	
	
	For both data, continuous covariates are rescaled to have a mean of zero and a standard deviation of one, binary covariates are rescaled to -1 and 1. The performance measure is the cross-validated weighted misclassification risk, which aggregates over all iterations of a 10-fold cross validation.

	Table \ref{CVRiskPrc} shows that the two-step and CRS minimization have similar or lower cross-validated weighted misclassification risks at penalty weight $\lambda$ = 0.2, 0.5 and 0.8 compared to the conditional thresholding approach for both data illustrations. In addition, the difference in cross-validated risks between conditional and joint thresholding is smaller for eight learners compared to four learners. For breast cancer malignancy prediction at $\lambda=0.2$ under the scenario of four candidate learners, the estimated risks of conditional and joint thresholding differ by as much as 2.3$\%$, which is a relative difference of 62$\%$. 
	
	Table \ref{estates table} displays the estimated thresholds and coefficients using eight candidate learners for the Kenyan HIV data under false-negative penalty $\lambda = $ 0.2 and 0.8, for conditional thresholding and our proposed approaches. In our analysis, both conditional thresholding and two-step minimization derive coefficients by nonnegative least squares linear regression and normalize to sum to one, hence their coefficient estimates are the same and stay fixed across $\lambda$. For CRS minimization, the coefficient estimating procedure involves optimizing a function of $\lambda$, so the estimated values may vary by $\lambda$. Threshold estimation depends on $\lambda$ for all approaches. Conditional thresholding and two-step minimization share the same estimated risk scores for all $\lambda$ values, but their threshold estimates are different due to the different thresholding methods.
	
	Figure \ref{2by2plot} provides the cross-validated risk comparison in a finer scale for the Kenyan HIV data and the Wisconsin diagnostic breast cancer data, considering both four and eight candidate learners. Differences in cross-validated risks between conditional and joint thresholding across $\lambda$ are larger for fewer candidate learners in the SL library, and joint thresholding approaches generally have better cross-validated risks compared with the conditional thresholding.  
	
	Figure \ref{wdbcplot} illustrates why misclassification differs between the thresholding methods. The vertical lines in Figure \ref{wdbcplot} indicate threshold values estimated at $\lambda=0.8$ based on SL, cross-validated SL, and $Z\hat{\alpha}$ predictions correspondingly from left to right. Under conditional thresholding, the rule is conditional on SL prediction, which may be subject to over-fitting (the first panel). By contrast, two-step and CRS minimization use cross-validated prediction for selecting the threshold, and do this within the SL (the third panel), which has a similar distribution as the cross-validated prediction of the entire SL (middle panel). As expected from risk analysis of the methods, estimated thresholds are very different for conditional and joint thresholding. In Figure \ref{wdbcplot}, threshold estimates from joint thresholding and cross-validating the entire SL are very similar, which further explains the out-performance of joint thresholding compared to conditional thresholding. Nonetheless, threshold calculation by cross-validating the entire SL requires substantially more computation time and complexity in both estimation and evaluation, and may result in over-cross-validating the data when sample size is not sufficiently large. Furthermore, none of the estimated thresholds is close to the optimal Bayes threshold of $1-\lambda=0.2$, implying that all the three types of derived predictions from SL do not highly match the unknown underlying true classification probability, indicating the neccessity to do optimal classification rule derivation.

	
	\section{Summary and Discussion}
	
	Weighted misclassification risk is often used to evaluate predictions when false-positives or false-negatives need to be prioritized differently. However, inference and rule derivation using weigthed misclassification risk is less common due to the difficulties in numerical computations. For binary classification using SL, we aimed to optimally estimate both the threshold and ensemble weights associated with candidate learners by minimizing the weighted misclassification risk. Through simulations and data examples, we showed that the conditional thresholding may generate sub-optimal classification rules. We proposed two options for joint thresholding, both embedding the threshold estimation procedure within SL, and showed that our proposal performs similarly or outperforms the conditional thresholding approach in terms of determining the optimal classification rules. 
	
	Our method presents a new way to estimate the classification rule. We show that the rules developed under either two-step or CRS minimization generally have lower error rates compared to conditional thresholding. From the comparison of density curves in Figure \ref{wdbcplot}  among SL prediction, cross-validated SL prediction and $Z\alpha$ within SL in data application, we can see that the conditional thresholding tends to overfit the data, which results in an incorrectly estimated risk. Therefore, we expect the actual risk from conditional thresholding to be closer to the true risk under two situations: first, when the training data distribution can well represent the true underlying data distribution, in which case there would not be much difference in distribution between the training and the test sets; second, when the ensemble risk score can well discriminate the data for both the training and the test set, as reflected by Figure \ref{2by2plot}, the difference between conditional and joint thresholding becomes smaller with more candidate learners in the SL library.
	
	Our work also provides a general framework for using ensemble learners for binary classification. Although we consider weighted loss functions as in \eqref{step3}, our method has the potential to be extended to more general threshold-based classifications, and numerical optimization method should change accordingly based on the nature of the threshold-based classification loss. We expect similar property to hold for other classification problems that involve threshold estimation, when the classification loss is a measurable function of classifiers.
	
	Furthermore, from Figure \ref{wdbcplot}, we anticipate the performance of our method to be comparable to threshold estimation based on cross-validated SL predictions. This is important for settings where computational complexity is high, or the size of the SL library and data are large, as threshold esimation based on cross-validated SL predictions may require substantial amount of computations, increasing complication and difficulty to method evaluation. 
	
	Code for the analysis was written in R \citep{R} and is available in the web appendix.
	

	\newpage
	
	\begin{table}[h]
		\centering
		\caption{Out-of-sample weighted misclassification risk of the optimal Bayes classification rule ($\%$) in the first row, and relative difference in out-of-sample weighted misclassification risk ($\%$)  at $\lambda = $ 0.2, 0.5 and 0.8 for simulation studies described in Section \ref{sim}, stratified by estimation methods and number of learners in SL library. Relative difference for a derived classification rule $\wh{Q}(\cdot)$ at $\lambda$ is $(\wh{R}_\lambda(\mathcal{D}_2;\wh{Q})-\wh{R}_\lambda(\mathcal{D}_2;Q_0))/\wh{R}_\lambda(\mathcal{D}_2;Q_0)$, where $Q_0$ is the optimal Bayes classification rule based on the true probability score $\Psi_0(U)$.}
		\label{CVRiskSim}
		\begin{tabular}{llccccccc}
			\hline
			&                                                                            & \multicolumn{3}{c}{4 Learners}                                                &                      & \multicolumn{3}{c}{8 Learners}                                                \\ \cline{3-5} \cline{7-9} 
			&                                                                            & \multicolumn{3}{c}{$\lambda$}                                                 &                      & \multicolumn{3}{c}{$\lambda$}                                                 \\
			&                                                                            & 0.2                     & 0.5                      & 0.8                      &                      & 0.2                     & 0.5                      & 0.8                      \\ \cline{3-5} \cline{7-9} 
			\multicolumn{1}{l}{} & \multicolumn{1}{l}{$\wh{R}_\lambda(\mathcal{D}_2;Q_0)(\%)$} & \multicolumn{1}{l}{6.1} & \multicolumn{1}{l}{14.8} & \multicolumn{1}{l}{12.9} & \multicolumn{1}{l}{} & \multicolumn{1}{l}{6.1} & \multicolumn{1}{l}{14.8} & \multicolumn{1}{l}{12.9} \\
			Simulation 1         & Conditional Thresholding                                                   & 0.0                     & 0.0                      & 1.6                      &                      & 0.0                     & 0.0                      & 1.6                      \\
			& Two-Step Minimization                                                      & 0.0                     & 0.0                      & 0.8                      &                      & 0.0                     & 0.0                      & 1.6                      \\
			& CRS Minimization                                                           & 0.0                     & 0.0                      & 0.8                      &                      & 0.0                     & 0.0                      & 1.6                      \\
			Simulation 2         & Conditional Thresholding                                                   & 16.4                    & 12.8                     & 11.6                     &                      & 8.2                     & 6.1                      & 9.3                      \\
			& Two-Step Minimization                                                      & 0.0                     & 0.0                      & 2.3                      &                      & 0.0                     & 0.0                      & 2.3                      \\
			& CRS Minimization                                                           & 0.0                     & 0.0                      & 2.3                      &                      & 0.0                     & 0.0                      & 2.3                     
			\\	\hline
		\end{tabular}
	\end{table}

	\begin{table}[h]
		\centering
		\caption{ Cross validated weighted misclassification risk ($\%$) at $\lambda = $ 0.2, 0.5 and 0.8 for two data examples, stratified by estimation methods and number of learners in SL library.}
		\label{CVRiskPrc}
		\begin{tabular}{llccccccc}
			\hline
			&                          & \multicolumn{3}{c}{4 Learners} &  & \multicolumn{3}{c}{8 Learners} \\ \cline{3-5} \cline{7-9} 
			&                          & \multicolumn{3}{c}{$\lambda$}  &  & \multicolumn{3}{c}{$\lambda$}  \\
			&                          & 0.2      & 0.5      & 0.8      &  & 0.2      & 0.5      & 0.8      \\ \cline{3-5} \cline{7-9} 
			Kenyan HIV    & Conditional Thresholding & 3.9      & 8.5      & 8.7      &  & 3.5      & 7.8      & 8.9      \\
			& Two-Step Minimization    & 3.2      & 7.0      & 8.7      &  & 3.3      & 7.6      & 8.8      \\
			& CRS Minimization         & 3.2      & 7.0      & 8.5      &  & 3.3      & 7.6      & 8.7      \\
			Breast Cancer & Conditional Thresholding & 3.7      & 2.5      & 1.2      &  & 1.8      & 1.6      & 0.9      \\
			& Two-Step Minimization    & 1.4      & 1.8      & 0.9      &  & 1.2      & 1.4      & 0.8      \\
			& CRS Minimization         & 1.4      & 1.8      & 0.8      &  & 1.2      & 1.4      & 0.9     \\
			\hline
		\end{tabular}
	\end{table}

	\begin{table}[h]
		\centering
		\caption{ Estimates of $(\alpha, c)$ at $\lambda = $ 0.2 and 0.8 for Kenyan HIV data, under estimation methods: Conditional Threshold (CT), Two-Step Minimization (2-Step) and CRS Minimization (CRS).\\ $*$ indicates that $\wh{\alpha}$ are the same for CT and 2-Step regardless of $\lambda$.}
		\label{estates table}
		\begin{tabular}{lccccccc}
			\hline
			& \multicolumn{3}{c}{$\lambda$ = .2}       &  & \multicolumn{3}{c}{$\lambda$ = .8}       \\
			& $\text{CT}^*$ & $\text{2-Step}^*$ & CRS  &  & $\text{CT}^*$ & $\text{2-Step}^*$ & CRS  \\ \cline{2-4} \cline{6-8} 
			$\hat{\alpha}_{\text{random forest}}$        & 0.11          &                   & 0.11 &  &               &                   & 0.04 \\
			$\hat{\alpha}_{\text{logistic regression}}$  & 0             &                   & 0    &  &               &                   & 0.19 \\
			$\hat{\alpha}_{\text{quadratic splines}}$    & 0.42          &                   & 0.42 &  &               &                   & 0.16 \\
			$\hat{\alpha}_{\text{CART}}$                 & 0             &                   & 0    &  &               &                   & 0.33 \\
			$\hat{\alpha}_{\text{10-NN}}$                & 0.20          &                   & 0.20 &  &               &                   & 0.01 \\
			$\hat{\alpha}_{\text{generalized boosting}}$ & 0.27          &                   & 0.27 &  &               &                   & 0.06 \\
			$\hat{\alpha}_{\text{SVM}}$                  & 0             &                   & 0    &  &               &                   & 0.12 \\
			$\hat{\alpha}_{\text{Bagging}}$              & 0             &                   & 0    &  &               &                   & 0.08 \\
			$\hat{c}$                                    & 0.62          & 0.73              & 0.73 &  & 0.16          & 0.19              & 0.18 \\ \hline
		\end{tabular}
	\end{table}

	\newpage
	
	\begin{figure}[ht]
		\centering
		\includegraphics[scale = 0.7]{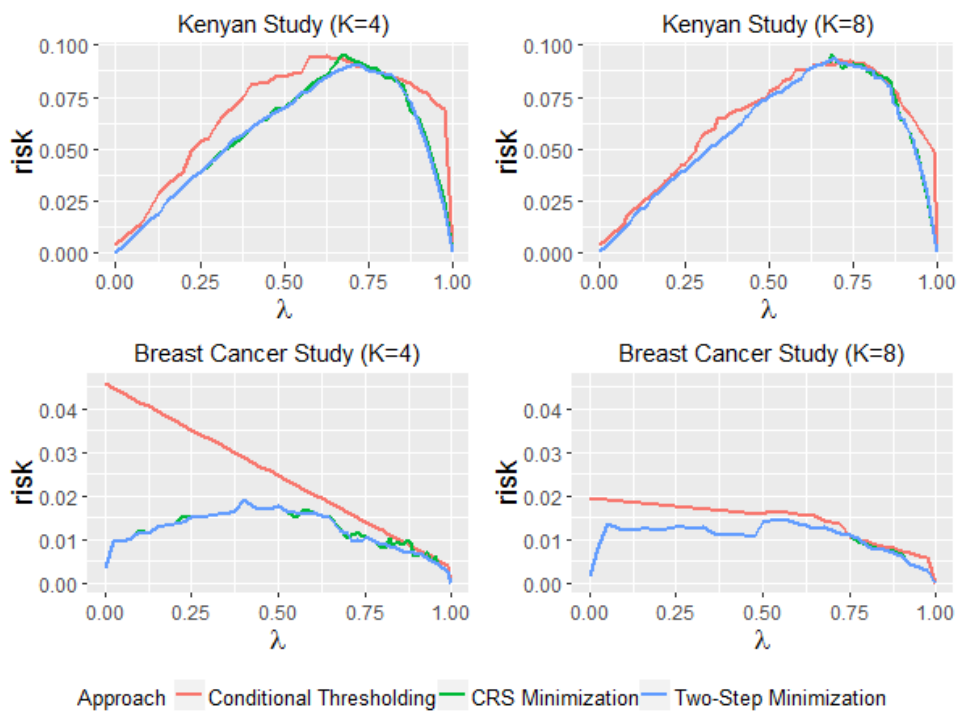}  
		\caption{Comparison of cross-validated weighted misclassification risk as a function of $\lambda$ for the two data applications, stratified by number of learners in SL library.}
		\label{2by2plot}
	\end{figure}
	
	\begin{figure}[ht]
		\centering
		\includegraphics[scale = 0.5]{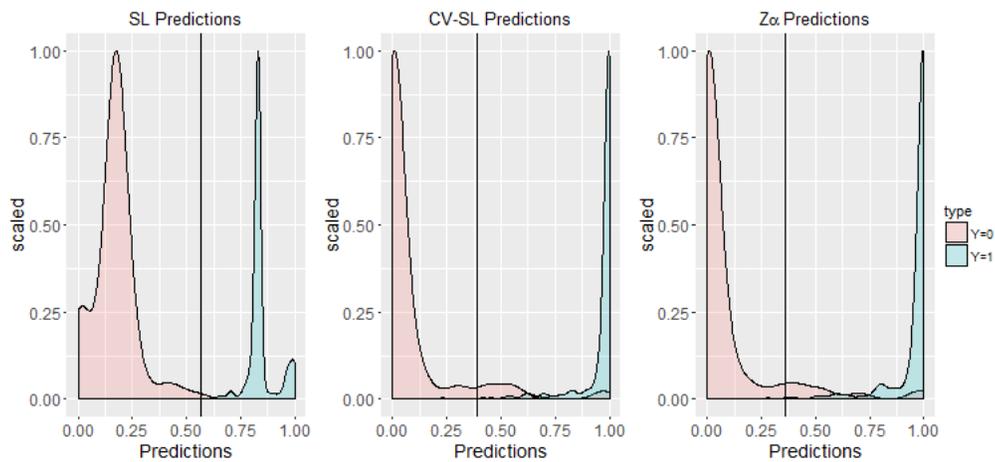}  
		\caption{Density curve of SL prediction, 10-fold cross-validated prediction of SL (CV-SL) and combined cross-validated prediction within SL ($Z\alpha$) for breast cancer data, using least squares regression under nonnegative and sum to one constraints on the coefficients for combining the eight library prediction scores. Vertical line represents estimated threshold based on corresponding type of prediction at $\lambda=0.8$.}
		\label{wdbcplot}
	\end{figure}

	\clearpage
	
	\section*{Web Appendix}
	
	\section*{A1. Application to SECOM Data}
	
	The following analysis uses the large p SECOM data set \citep{dheeru_uci_2017} from the UCI machine learning repository. Each sample in the SECOM data set represents a single production entity from a modern semi-conductor manufacturing process; the outcome represents a pass/fail yield for in-house line testing and the features are measured signals from the monitoring system. The original data set has 1567 samples, 591 features, and 104 fails in its outcome. The number of features in the data set is over 1/3 of the sample size, and regular small p methods such as linear regression and additive splines do not converge for this data. 
	
	We cleaned the data so that all the features have more than one unique values and each feature has either no or at least 30 missingness. After accounting for missingness by creating missing indicators and filling in 0’s for missing measurements, there are 1436 samples, 484 features, and 100 fails. We used three candidate learners for this large p application: random forest, lasso, and leekasso. Leekasso does linear model using the top 10 most significant predictors from fitting univariate models of each covariate. These three candidate learners were chosen from convergence and computation speed considerations. 
	
	From the following cross-validated risk plot we show that our proposed joint thresholding can be applied to large-p problems with properly selected candidate learners and our method still performs better than the conditional thresholding. 
	
	\begin{figure}[H]
		\centering
		\includegraphics[scale = 0.6]{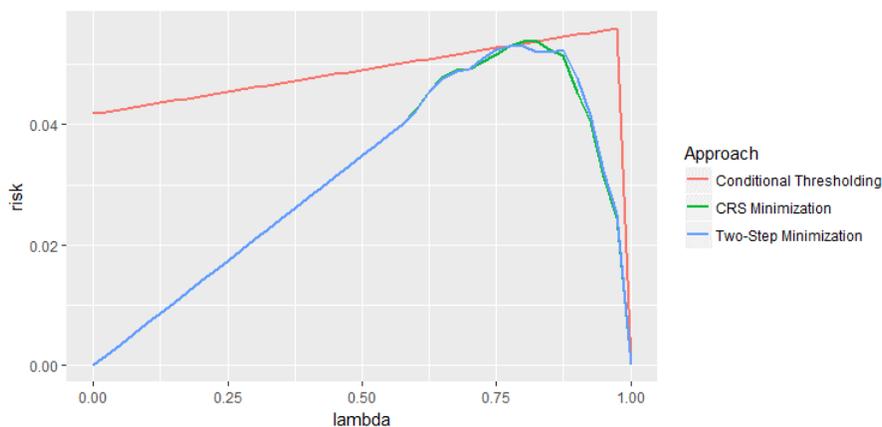}  
		\caption{Comparison of cross-validated weighted misclassification risk as a function of $\lambda$ for the SECOM data application between conditional thresholding and joint thresholding (CRS and two-step minimization), under the following candidate learners in SL library: random forest, lasso, and leekasso.}
	\end{figure}
	
	\section*{A2. Comparison with Candidate Learners -- SECOM Data}
	
	Because candidate learners return risk estimations, which are continuous scores, each individual candidate learner would need a threshold in order to show its classification performance under the weighted misclassification loss. For a single learner, the joint thresholding method reduces to deriving a threshold based on the cross-validated predictions of that learner. To actually see how much additional gain the ensemble learner is achieving, we used the SECOM data example again. The 10-fold cross-validated risks are shown in Figure 2. Table 1 has the values of the curves in Figure 2 specifically at $\lambda = 0.1,0.2,\ldots,0.9$.
	
	From Figure 2 we can see that the performance of SL with joint thresholding is similar to its best candidate learner, which is the random forest with joint thresholding in this case.
	
	\begin{figure}[ht]
		\centering
		\includegraphics[scale = 0.8]{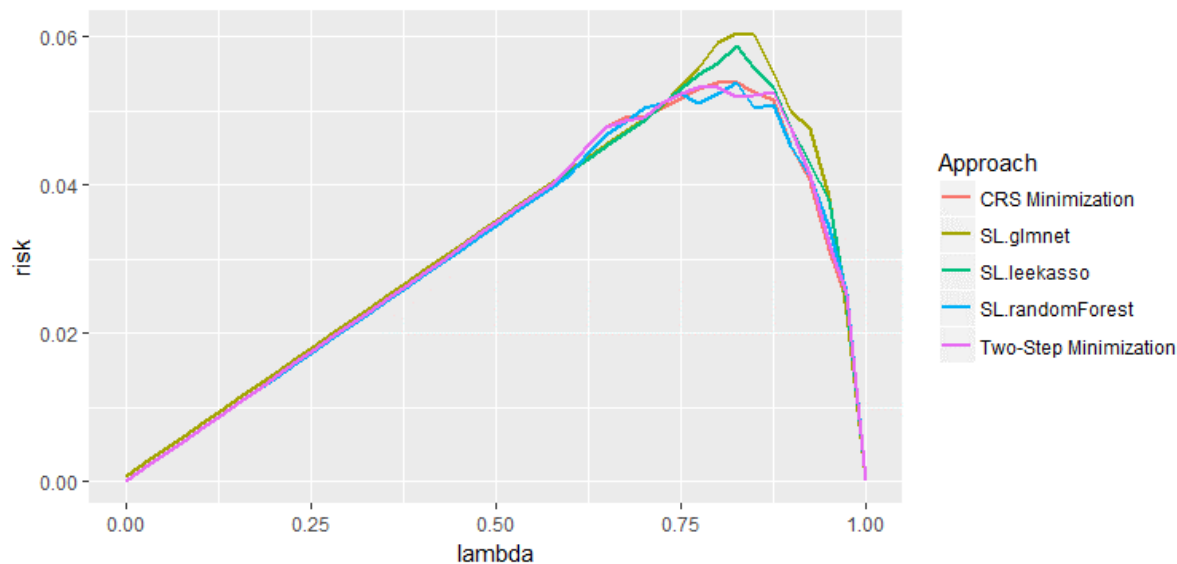}  
		\caption{Comparison of cross-validated weighted misclassification risk with joint thresholding as a function of $\lambda$ for the SECOM data application among the SL (CRS minimization and two-step minimization) and the individual candidate learners (random forest, lasso, and leekasso).}
	\end{figure}
	
	\begin{table}[h]
		\centering
		\caption{Cross-validated weighted misclassification risk with joint thresholding at $\lambda$ from 0.1 to 0.9 for the SECOM data application, stratified by estimation methods}
		\begin{tabular}{cccccccccc}
			\hline
			$\lambda$                      & 0.1   & 0.2   & 0.3   & 0.4   & 0.5   & 0.6   & 0.7   & 0.8   & 0.9   \\
			SL Joint Thresholding Two-Step & 0.007 & 0.014 & 0.021 & 0.028 & 0.035 & 0.042 & 0.049 & 0.053 & 0.048 \\
			SL Joint Thresholding CRS      & 0.007 & 0.014 & 0.021 & 0.028 & 0.035 & 0.042 & 0.049 & 0.054 & 0.045 \\
			RF+CV threshold                & 0.007 & 0.014 & 0.021 & 0.028 & 0.034 & 0.041 & 0.05  & 0.052 & 0.045 \\
			lasso+CV threshold             & 0.008 & 0.014 & 0.021 & 0.028 & 0.035 & 0.042 & 0.049 & 0.059 & 0.05  \\
			leekasso+CV threshold                    & 0.007 & 0.014 & 0.021 & 0.028 & 0.035 & 0.042 & 0.049 & 0.056 & 0.048 \\ \hline
		\end{tabular}
	\end{table}

	\newpage
	
	\section*{A3. Theoretical Justification}\label{theory}
	
	Our theoretical justification follows a similar road map as in \citet{van2007super} and \citet{vaart_oracle_2009}. We first introduce and revise some notations.
	
	Given false negative penalty $\lambda\in (0,1)$, let us denote the true risk minimizer is the classifier by
	$$Q_0 = \operatorname*{argmin}_Q E_{(Y,X)} L_\lambda(Y, Q(X)) = \operatorname*{argmin}_Q R_\lambda (Q)$$
	where $$L_\lambda(Y,Q(X))=\lambda \mathbbm{1}\{Q(X)=0,Y=1\}+(1-\lambda)\mathbbm{1}\{Q(X)=1,Y=0\}$$ and $$R_\lambda (Q) = \int L_\lambda(y, Q(x)) dP(x, y).$$ 
	In the above expression, notice that $P$ is the unknown true distribution over the outcome $Y$ and covariates $X$.
	
	The distance between any two classifiers $Q_1$ and $Q_2$ is defined as the absolute difference between their risks, 
	$$d_\lambda(Q_1, Q_2) = |R_\lambda(Q_1) - R_\lambda(Q_2)|.$$
	
	For SL, there is a collection of scores from the $K$ candidate learners, $(\Psi_1, \ldots , \Psi_K)$. We parameterize the classification rule as $Q_\theta(\cdot)$, where $\theta = (\alpha, c)$, $\alpha$ is the vector of coefficients that linearly combines the $K$ scores, and $c$ is the threshold to be applied to the combined score for binary classification,
	$$Q_\theta(x) = \mathbbm{1}\{\sum^K_{k=1} \alpha_k \Psi_k(x) \ge c\}$$
	
	Without loss of generality, we assume the coefficients in $\alpha$ to be constrained as $|\alpha_i|\in [0,1], i=1,\ldots,K$ and $\sum_{i=1}^{K} \alpha_i = 1$.  Denote the range of score predictions as $\Omega$, which is assumed to be bounded, the parameter space of $\theta$ can be written as $\Theta = [0,1]^K \times \Omega$. We consider a grid $\Theta_n$ of $\theta$ values in the bounded parameter space $\Theta$, and let $K(n) = \#\Theta_n$ be the number of grid points such that $K(n)\le n^q$ for some constant $q < \infty$. We then consider $\mathcal{Q} = \{Q_\theta: \theta \in \Theta_n\}$ as a collection of candidate classifiers. 
	
	Next, we formalize cross-validation as in \citet{vaart_oracle_2009} and describe our estimators accordingly following these notations. Let $S = (S_1,\ldots,S_n)\in\{0,1\}^n$ be a random vector independent of data samples $X_1,\ldots,X_n$; $S_i=0$ indicates that the sample $X_i$ belongs to the training set, otherwise the test set. We can then define the empirical distributions of the training and the test set by 
	$$P_S^j = \frac{1}{n^j} \sum_{i:S_i=j} \delta_{X_i},\qquad n^j = \sum_{i=1}^n \mathbbm{1}\{S_i=j\}, \qquad j=0,1$$
	where $ \delta_{X_i}(x) = \mathbbm{1}\{x\ge X_i\}$ for $i=1,\ldots,n$.
	
	$Q_\theta(P_S^0)$ is a classifier estimated from the training set,
	\begin{align*}
	Q_\theta(P_S^0) (x) &= \mathbbm{1}\{\sum^K_{k=1} \alpha_k\Psi_k(P_S^0)(x) \ge c\}.
	\end{align*}
	
	An oracle selector of $\theta$ is one that its corresponding classifier estimated from the training set minimizes the risk on the unknown distribution $P$ averaged over the splits, 
	\begin{align*}
	\tilde{\theta} &= \operatorname*{argmin}_{\theta\in \Theta_n} \mathrm{E}_S \bigg[\int L_\lambda(y, Q_\theta(P_S^0) (x)) dP(x, y)\bigg]\\
	&= \operatorname*{argmin}_{\theta\in \Theta_n} \mathrm{E}_S \bigg[R_\lambda(Q_\theta(P_S^0)) - R_\lambda( Q_0)\bigg].
	\end{align*}
	The oracle classifier is $Q_{\tilde{\theta}}$, and among all classifiers estimated from the training set, it is the closest in distance to the true risk minimizer $Q_0$.
	
	Cross validation replaces $P$ by the test set distribution $P_S^1$, and uses $\theta_n$ as the cross-validated selector of $\theta$, 
	\begin{align*}
	\theta_n &= \operatorname*{argmin}_{\theta\in \Theta_n} \mathrm{E}_S \bigg[\int L_\lambda(y, Q_\theta(P_S^0) (x)) dP_S^1(x, y)\bigg].
	\end{align*}
	
	The realization of $\theta_n$ is the parameter estimate for SL with joint thresholding as presented in the paper,
	\begin{align*}
	\theta_n &\equiv \operatorname*{argmin}_{\theta\in \Theta_n}  \frac{1}{n}\sum^n_{i=1} \lambda \mathbbm{1}\{\sum^K_{k=1} \alpha_kZ_{ik}<c,Y_i=1\}+(1-\lambda)\mathbbm{1}\{\sum^K_{k=1} \alpha_kZ_{ik}\ge c,Y_i=0\}
	\end{align*}
	where  $Z = \{\wh{\Psi}_{k, T(d)} (X_{V(d)}), k = 1,\ldots,K, d = 1,\ldots,D\}$ is the stacked cross-validated predictions as mentioned in Sections 3 and 4. In Section 4, the $(\wh{\alpha}, \wh{c})$ at the end of our proposed procedure is the solution or approximation to the $\theta_n$.
	
	The goal of this section is to theoretically prove that, the risk averaged over data splits for classifier estimator $Q_{\theta_n}$ asymptotically converges to that of the oracle classifier $Q_{\tilde{\theta}}$. \citet{vaart_oracle_2009} (Theorem 2.3) established the following inequality for the risk averaged over data splits between the cross-validated selector and the oracle selector.
	
	\begin{theorem}
		\citep{vaart_oracle_2009} For $Q\in \mathcal{Q}$, let $(M(Q),v(Q))$ be a Bernstein pair for the measurable function $z\mapsto L(z,Q)$ and assume that $R(Q) = \int L(z,Q) dP(z)\ge 0$ for every $Q\in\mathcal{Q}$. Then for any $\delta>0$ and $1\le p \le 2$,
		\begin{align*}
		\mathrm{E}_S R(Q_{\hat{\theta}}(P_S^0))&\le (1+2\delta)\mathrm{E}_S R(Q_{\tilde{\theta}}(P_S^0))+ (1+\delta)\mathrm{E}_S \bigg[ \frac{16}{(n^1)^{1/p}}\bigg]\\
		& \times \log(1+\#\mathcal{Q}) \operatorname*{sup}_{Q\in \mathcal{Q}} \bigg[ \frac{M(Q)}{(n^1)^{1-1/p}} + \bigg(\frac{v(Q)}{R(Q)^{2-p}}\bigg)^{1/p} \bigg(\frac{1+\delta}{\delta}\bigg)^{2/p-1}\bigg].
		\end{align*}
	\end{theorem}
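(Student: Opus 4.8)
The statement is the finite-sample oracle inequality of \citet{vaart_oracle_2009} for the cross-validation selector, so the plan is to reconstruct its proof by pairing the defining optimality of the selector $\hat{\theta}$ with a Bernstein-type maximal inequality over the finite class $\mathcal{Q}$. Throughout I abbreviate the split-averaged cross-validated and population risks by $\wh{\Theta}(\theta) = \mathrm{E}_S \int L(z,Q_\theta(P_S^0))\, dP_S^1(z)$ and $\Theta(\theta) = \mathrm{E}_S R(Q_\theta(P_S^0))$, so that by construction $\hat{\theta}$ minimizes $\wh{\Theta}$ and $\tilde{\theta}$ minimizes $\Theta$ over $\Theta_n$; I take expectation over the observations in addition to the split average $\mathrm{E}_S$, as is standard for this type of bound. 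Writing $G(\theta) = \mathrm{E}_S \int L(z,Q_\theta(P_S^0))\, d(P_S^1 - P)(z)$ for the centered empirical process, the relation $\Theta = \wh{\Theta} - G$ together with $\wh{\Theta}(\hat{\theta}) \le \wh{\Theta}(\tilde{\theta})$ gives the elementary decomposition
\begin{align*}
\Theta(\hat{\theta}) \;\le\; \Theta(\tilde{\theta}) + G(\tilde{\theta}) - G(\hat{\theta}),
\end{align*}
which reduces the whole problem to controlling $G$ at the deterministic oracle point $\tilde{\theta}$ (a single, easy term) and, uniformly, at the data-dependent point $\hat{\theta}$.

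The core estimate comes from conditioning on the training folds. Given $\{P_S^0\}$, each $Q_\theta(P_S^0)$ is a fixed function, the validation observations are i.i.d.\ draws from $P$ independent of the training data, and $(P_S^1-P)L(\cdot,Q_\theta(P_S^0))$ is a centered mean of $n^1$ i.i.d.\ terms. The Bernstein-pair hypothesis $(M(Q),v(Q))$ then delivers, for each fixed $\theta$, the exponential tail $\Pr\bigl(\pm(P_S^1-P)L(\cdot,Q_\theta(P_S^0)) > t\bigr) \le \exp\bigl(-n^1 t^2 / (2(v(Q)+M(Q)t))\bigr)$, which is the only probabilistic input used beyond the finiteness of $\mathcal{Q}$.

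The heart of the argument --- and the step I expect to be the main obstacle --- is turning these pointwise tails into the stated self-normalized maximal inequality. Because $\hat{\theta}$ ranges over a finite set, I would apply a union bound over the $\#\mathcal{Q}$ candidates and integrate the Bernstein tail to bound $\mathrm{E}\,\max_\theta\bigl[-G(\theta) - \rho\,\Theta(\theta)\bigr]_+$ for a suitable multiplier $\rho = \rho(\delta,p)$. The delicate point is the weighting by the excess risk: applying Young's inequality to split the $\sqrt{v(Q)\log(\#\mathcal{Q})/n^1}$ contribution into a piece proportional to $\delta\,\Theta(\hat{\theta})$, which is absorbed back across the inequality to turn the leading coefficient on the oracle risk from $1$ into $(1+2\delta)$, and a deterministic remainder. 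Carrying the interpolation parameter $1 \le p \le 2$ through this split --- so that $p=2$ uses the variance $v(Q)$ and $p=1$ uses the range $M(Q)$ --- is exactly what produces the two summands $M(Q)/(n^1)^{1-1/p}$ and $\bigl(v(Q)/R(Q)^{2-p}\bigr)^{1/p}\bigl((1+\delta)/\delta\bigr)^{2/p-1}$ inside the supremum, together with the prefactor $16(1+\delta)\,\mathrm{E}_S[(n^1)^{-1/p}]\log(1+\#\mathcal{Q})$.

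Finally I would assemble the pieces: $\mathrm{E}[G(\tilde{\theta})]$ is handled by the same Bernstein bound but without any union penalty, and is dominated by the uniform remainder, while the assumption $R(Q)\ge 0$ guarantees that the self-normalization by $R(Q)$ is legitimate and free of sign problems. What remains is purely constant-tracking: choosing $\rho$ in terms of $\delta$ and $p$, verifying that the integrated Bernstein tail yields precisely the factor $16$ and the stated powers of $n^1$, and checking that replacing the expected maximum by the supremum over $\mathcal{Q}$ after the union step is an admissible upper bound. I anticipate that this last accounting, rather than any new idea, is where the bulk of the effort lies.
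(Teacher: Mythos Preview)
The paper does not prove this statement at all: Theorem~1 is quoted verbatim from \citet{vaart_oracle_2009} as an imported result, and the appendix immediately moves on to combine it with Lemma~1 to obtain Theorem~2. There is therefore no ``paper's own proof'' to compare your proposal against.

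That said, your sketch is a faithful outline of how the oracle inequality is actually established in the cited reference: the decomposition $\Theta(\hat\theta)\le \Theta(\tilde\theta)+G(\tilde\theta)-G(\hat\theta)$ from the defining optimality of $\hat\theta$, conditioning on the training folds so that the validation empirical process is a centered i.i.d.\ average, pointwise Bernstein tails from the Bernstein-pair hypothesis, a union bound over the finite class $\mathcal{Q}$, and the Young/interpolation trick that absorbs a $\delta\,\Theta(\hat\theta)$ piece to produce the $(1+2\delta)$ leading constant and the $p$-dependent remainder. Your identification of the constant-tracking as the main labor is accurate. If your intent was to supply what the paper omits, the plan is sound; if your intent was to reproduce what the paper does, the correct answer is simply that the paper invokes the result by citation and gives no argument.
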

	
	Recall that for a measurable function $f:\mathcal{X} \rightarrow \mathcal{R}$, $(M(f), v(f))$ is a pair of Bernstein numbers if $$M(f)^2P\bigg(e^{|f|/M(f)}-1-\frac{|f|}{M(f)}\bigg)\le \frac{1}{2} v(f).$$ And it was shown in \citet{vaart_oracle_2009} that if $f$ is uniformly-bounded, then $(||f||_{\infty}, 1.5Pf^2)$ is a pair of Bernstein numbers.
	
	\begin{lemma}
		For a weighted misclassification loss $(x,y)\mapsto L_\lambda(y,Q(x))$, where $Q: x\mapsto \{0,1\}$ and $\lambda\in(0,1)$, its Bernstein pairs $(M(Q),v(Q))$ satisfy
		$$M(Q) = \operatorname*{max}\{\lambda,1-\lambda\}$$ and
		$$v(Q) = \lambda^2 P(Q(X)=0,Y=1) + (1-\lambda)^2 P(Q(X)=1,Y=0).$$
		Furthermore, $v(Q)\le 1.5\times \operatorname*{max}\{\lambda,1-\lambda\} \times R_\lambda (Q)$
	\end{lemma}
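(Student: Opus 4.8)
The plan is to treat the loss $f(x,y):=L_\lambda(y,Q(x))$ as a fixed, uniformly bounded measurable function and then invoke the fact recalled just above the lemma, that a uniformly bounded $f$ admits the Bernstein pair $(\|f\|_\infty,\,1.5\,Pf^2)$. \textbf{Step 1: range of the loss.} For a deterministic classifier $Q:\mathscr{X}\to\{0,1\}$, the function $f$ is piecewise constant --- it equals $\lambda$ on the event $\{Q(X)=0,Y=1\}$, equals $1-\lambda$ on $\{Q(X)=1,Y=0\}$, and vanishes on the complement of their union --- so $|f|\le\max\{\lambda,1-\lambda\}=:M(Q)$ pointwise, and hence $|f|/M(Q)\le 1$ everywhere. \textbf{Step 2: second moment.} Since $f$ takes only the three values $0,\lambda,1-\lambda$, we get $Pf^2=\lambda^2 P(Q(X)=0,Y=1)+(1-\lambda)^2 P(Q(X)=1,Y=0)=:v(Q)$, which is exactly the claimed expression.

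\textbf{Step 3: the Bernstein inequality.} Using the elementary scalar bound $e^u-1-u\le(e-2)u^2$ for $u\in[0,1]$ --- equivalently, $u\mapsto(e^u-1-u)/u^2$ is nondecreasing on $(0,1]$ with value $e-2<3/4$ at $u=1$ --- together with the pointwise bound $|f|/M(Q)\le 1$ from Step 1,
\begin{align*}
M(Q)^2\, P\!\left(e^{|f|/M(Q)}-1-\frac{|f|}{M(Q)}\right)\;\le\;(e-2)\,Pf^2\;\le\;\tfrac12\cdot 1.5\, v(Q),
\end{align*}
which is precisely the Bernstein condition for the pair $(M(Q),1.5\,v(Q))$; i.e. $(M(Q),v(Q))$ is a Bernstein pair once the universal factor $1.5$ from the recalled uniformly-bounded construction is carried along. \textbf{Step 4: the ``furthermore''.} By the risk identity of Section \ref{LOSS}, $R_\lambda(Q)=\lambda P(Q(X)=0,Y=1)+(1-\lambda)P(Q(X)=1,Y=0)$. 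Bounding term by term, $\lambda^2 P(Q(X)=0,Y=1)\le\max\{\lambda,1-\lambda\}\cdot\lambda P(Q(X)=0,Y=1)$ and $(1-\lambda)^2 P(Q(X)=1,Y=0)\le\max\{\lambda,1-\lambda\}\cdot(1-\lambda)P(Q(X)=1,Y=0)$, and adding the two yields $v(Q)\le\max\{\lambda,1-\lambda\}\,R_\lambda(Q)\le 1.5\max\{\lambda,1-\lambda\}\,R_\lambda(Q)$.

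Every step is elementary and there is no deep obstacle; the only points needing care are confirming the one-variable inequality $e^u-1-u\le(e-2)u^2$ on $[0,1]$ and keeping the universal constant $1.5$ aligned with the cited uniformly-bounded Bernstein-pair fact. The substantive payoff, rather than any difficulty, is structural: $M(Q)=\max\{\lambda,1-\lambda\}$ is independent of both $Q$ and the sample size, and $v(Q)/R_\lambda(Q)$ is bounded uniformly over $Q\in\mathcal{Q}$ by $1.5\max\{\lambda,1-\lambda\}$. Substituting these bounds into Theorem 1 with $p=1$ (so that the term $v(Q)/R(Q)^{2-p}=v(Q)/R(Q)$ is uniformly controlled) is exactly what collapses the remainder in the main theorem to $O(\log(n)/\sqrt{n})$.
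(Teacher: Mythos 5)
Your proof is correct and follows essentially the same route as the paper's: bound the loss by its range $\{0,\lambda,1-\lambda\}$ to get $M(Q)$, compute the second moment to get $v(Q)$, and bound each term of the second moment by $\max\{\lambda,1-\lambda\}$ times the corresponding term of $R_\lambda(Q)$. The only difference is cosmetic: you verify the ``uniformly bounded implies $(\|f\|_{\infty},1.5Pf^2)$ is a Bernstein pair'' fact explicitly via $e^u-1-u\le(e-2)u^2$ on $[0,1]$, whereas the paper simply cites it, and both treatments carry the same slight bookkeeping ambiguity about whether the factor $1.5$ is absorbed into $v(Q)$ or appended when forming the pair.
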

	\begin{proof}
		The loss function $L_\lambda(y,Q(x))=\lambda\mathbbm{1}\{Q(x)=0,y=1\}+(1-\lambda)\mathbbm{1}\{Q(x)=1,y=0\}$ has range $\{0,\lambda,1-\lambda\}$ and hence is bounded by $\operatorname*{max}\{\lambda,1-\lambda\}$. 
		
		By definition, risk can be written as
		$$R_\lambda(Q) = \lambda P(Q(X)=0,Y=1)+(1-\lambda) P(Q(X)=1,Y=0).$$
		Hence, the second moment of the loss function has the following form and upper bound,
		\begin{align*}
		& E_{(x,y)}\bigg[L^2(y,Q(x))\bigg] \\
		&= E_{(x,y)}\bigg[\lambda^2\mathbbm{1}\{Q(x)=0,y=1\}+(1-\lambda)^2\mathbbm{1}\{Q(x)=1,y=0\}\bigg] \\
		&= \lambda^2 P(Q(X)=0,Y=1) + (1-\lambda)^2 P(Q(X)=1,Y=0)\\
		&\le  \operatorname*{max}\{\lambda,1-\lambda\}\times R_\lambda(Q).
		\end{align*}
		Therefore,
		$$v(Q) = 1.5 E_{(x,y)}\bigg[L^2(y,Q(x))\bigg] \le 1.5\times \operatorname*{max}\{\lambda,1-\lambda\} \times R_\lambda (Q).$$
	\end{proof}
	
	We apply theorem 1 \citep{vaart_oracle_2009} with the lemma, so that for the weighted misclassification loss, we have
	
	\begin{theorem}
		For classifiers in $\mathcal{Q} = \{Q_\theta: \theta \in \Theta_n\}$ and the weighted misclassification loss $L_\lambda(Y,Q(X))$ at a given $\lambda\in (0,1)$, there is
		\begin{align*}
		\mathrm{E}_S R_\lambda(Q_{\hat{\theta}}(P_S^0))&\le \mathrm{E}_S R_\lambda(Q_{\tilde{\theta}}(P_S^0)) + O\bigg(\frac{\text{log}(n)}{\sqrt{n}}\bigg).
		\end{align*}
		
	\end{theorem}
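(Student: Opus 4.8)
The plan is to derive the bound directly from Theorem 1 of \citet{vaart_oracle_2009} by substituting the Bernstein pair computed in the Lemma and then choosing the free parameters $p$ and $\delta$ so that the multiplicative constant in front of the oracle risk collapses into an additive remainder of order $\log(n)/\sqrt{n}$.

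First I would check that the hypotheses of Theorem 1 hold for $\mathcal{Q} = \{Q_\theta : \theta \in \Theta_n\}$ and the map $z = (x,y) \mapsto L_\lambda(y, Q(x))$: the risk $R_\lambda(Q) = \int L_\lambda(y, Q(x))\, dP(x,y)$ is nonnegative because $L_\lambda \ge 0$, and the Lemma exhibits a Bernstein pair with $M(Q) = \max\{\lambda, 1-\lambda\}$ and $v(Q) = 1.5\, E_{(x,y)}[L_\lambda^2(y,Q(x))]$, which is precisely the uniformly bounded case recalled just before the Lemma. Because $\Theta_n$ is a finite grid with $\#\Theta_n = K(n) \le n^q$, we have $\log(1 + \#\mathcal{Q}) = O(\log n)$; and because the random split $S$ produces a test set of nonnegligible size, $n^1$ is at least a fixed fraction of $n$, so $\mathrm{E}_S\big[16/(n^1)^{1/p}\big] = O(n^{-1/p})$ and $(n^1)^{1-1/p}$ is bounded below by a constant times $n^{1-1/p}$.

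Next I would apply Theorem 1 with $p = 2$; this is the key choice. With $p=2$ the exponent $2/p - 1$ vanishes, so the factor $\big((1+\delta)/\delta\big)^{2/p-1}$ disappears and the remainder carries no $1/\delta$ term. Also $\big(v(Q)/R_\lambda(Q)^{2-p}\big)^{1/p} = \sqrt{v(Q)}$, and from the Lemma together with the crude bound $R_\lambda(Q) \le \max\{\lambda, 1-\lambda\} < 1$ we get $\sqrt{v(Q)} \le \sqrt{1.5}\,\max\{\lambda, 1-\lambda\}$, while $M(Q)/(n^1)^{1/2} = O(n^{-1/2})$; hence $\sup_{Q\in\mathcal{Q}}\big[M(Q)/(n^1)^{1/2} + \sqrt{v(Q)}\big] = O(1)$ with the constant depending only on $\lambda$. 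Assembling these estimates, Theorem 1 yields
\begin{align*}
\mathrm{E}_S R_\lambda(Q_{\hat\theta}(P_S^0)) \;\le\; (1+2\delta)\, \mathrm{E}_S R_\lambda(Q_{\tilde\theta}(P_S^0)) + (1+\delta)\, O\!\left(\frac{\log n}{\sqrt n}\right),
\end{align*}
where the implicit constant depends only on $\lambda$ and the fixed split proportions.

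Finally I would let $\delta = \delta_n \to 0$, for instance $\delta_n = n^{-1/2}$, so that $(1+\delta_n) \to 1$ keeps the remainder at order $\log(n)/\sqrt{n}$, while the excess oracle term $2\delta_n\, \mathrm{E}_S R_\lambda(Q_{\tilde\theta}(P_S^0))$ is $O(\delta_n) = O(n^{-1/2})$ since $R_\lambda$ is bounded by $\max\{\lambda, 1-\lambda\} < 1$. Absorbing this into the $O(\log(n)/\sqrt n)$ remainder gives
\begin{align*}
\mathrm{E}_S R_\lambda(Q_{\hat\theta}(P_S^0)) \;\le\; \mathrm{E}_S R_\lambda(Q_{\tilde\theta}(P_S^0)) + O\!\left(\frac{\log n}{\sqrt n}\right),
\end{align*}
which is the claim. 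The main obstacle is precisely this last maneuver: the oracle inequality only bounds $\mathrm{E}_S R_\lambda(Q_{\hat\theta})$ by a \emph{multiple} $(1+2\delta)$ of the oracle risk plus a remainder that, for $p<2$, grows like $1/\delta$; one must therefore pick $p$ (here $p=2$) for which the remainder is $\delta$-free before it is safe to drive $\delta$ to zero and thereby turn the multiplicative factor into a negligible additive $O(n^{-1/2})$ correction. Everything else is routine bookkeeping with the bounded-loss constants supplied by the Lemma.
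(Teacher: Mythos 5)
Your proposal is correct and follows essentially the same route as the paper's proof: apply the \citet{vaart_oracle_2009} oracle inequality with the Bernstein pair from the Lemma, use $\#\mathcal{Q}\le n^q$ and $n^1=O(n)$, choose $p=2$ and $\delta=1/\sqrt{n}$, and absorb the resulting $2\delta$-times-bounded-oracle-risk term into the $O(\log(n)/\sqrt{n})$ remainder. The only difference is presentational --- you motivate the choice $p=2$ by noting it kills the $1/\delta$ factor, which the paper does implicitly.
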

	\begin{proof}
		Write $C = \operatorname*{max}\{\lambda,1-\lambda\}$. Recall that we assumed the cardinality of $\Theta_n$ as $K(n)\le n^q$, and by definition, $\#\mathcal{Q} = \#\Theta_n$. Hence, we have
		$\log(1+\#\mathcal{Q}) \le q \text{log}(n).$
		Therefore, applying the inequality in Lemma 1 to Theorem 1, for any $\delta>0$ and $1\le p \le 2$, there is
		\begin{align*}
		\mathrm{E}_S R_\lambda(Q_{\hat{\theta}}(P_S^0))\le & (1+2\delta)\mathrm{E}_S R_\lambda(Q_{\tilde{\theta}}(P_S^0))+ (1+\delta)\mathrm{E}_S \bigg[ \frac{16}{(n^1)^{1/p}}\bigg]\\
		& \times \log(1+\#\mathcal{Q}) \operatorname*{sup}_{Q\in \mathcal{Q}} \bigg[ \frac{C}{(n^1)^{1-1/p}} + \bigg(1.5C\times R_\lambda(Q)^{p-1}\bigg)^{1/p} \bigg(\frac{1+\delta}{\delta}\bigg)^{2/p-1}\bigg]
		\end{align*}
		Furthermore, by $R_\lambda(Q) \le C$ and  $\#\mathcal{Q} \le n^q $, 
		\begin{align*}
		L.H.S \le & (1+2\delta)\mathrm{E}_S R_\lambda(Q_{\tilde{\theta}}(P_S^0))+ (1+\delta)\mathrm{E}_S \bigg[ \frac{16}{(n^1)^{1/p}}\bigg]\\
		& \times q \text{log}(n) \times \bigg[ \frac{C}{(n^1)^{1-1/p}} + \bigg(1.5C^p\bigg)^{1/p} \bigg(\frac{1+\delta}{\delta}\bigg)^{2/p-1}\bigg]\\
		\le & (1+2\delta)\mathrm{E}_S R_\lambda(Q_{\tilde{\theta}}(P_S^0))+ (1+\delta)\mathrm{E}_S \bigg[ \frac{16}{(n^1)^{1/p}}\bigg]\\
		& \times q \text{log}(n) \times \bigg[ \frac{C}{(n^1)^{1-1/p}} + 1.5C \bigg(\frac{1+\delta}{\delta}\bigg)^{2/p-1}\bigg]
		\end{align*}
		
		Sample size of the test set is approximately a fixed proportion of the entire data, so $n^1 = O(n)$. Let $p=2$ and $\delta = 1/\sqrt{n}$, then the above inequality becomes 
		\begin{align*}
		\mathrm{E}_S R_\lambda(Q_{\hat{\theta}}(P_S^0))\le & \mathrm{E}_S R_\lambda(Q_{\tilde{\theta}}(P_S^0)) + 2\delta + (1+\delta)\mathrm{E}_S \bigg[ \frac{16}{(n^1)^{1/2}}\bigg]\\
		& \times q \text{log}(n) \times \bigg[ \frac{C}{(n^1)^{1-1/2}} + 1.5C \bigg]\\
		= & \mathrm{E}_S R_\lambda(Q_{\tilde{\theta}}(P_S^0)) + O\bigg(\frac{\text{log}(n)}{\sqrt{n}}\bigg).
		\end{align*}
	\end{proof}
	$\frac{\text{log}(n)}{\sqrt{n}}$ asymptotically goes to zero. As long as 
	\begin{equation} \label{convergence}
	\frac{\text{log}(n)/\sqrt{n}}{\mathrm{E}_S R_\lambda(Q_{\tilde{\theta}}(P_S^0)) } \rightarrow 0 \quad as \quad n\rightarrow \infty
	\end{equation}
	then $Q_{\theta_n}$ is asymptotically equivalent to the oracle estimator $Q_{\tilde{\theta}}$ in terms of their true risks averaged over data splittings when fitting the estimators on the training set,
	$$\frac{\mathrm{E}_S R_\lambda(Q_{\hat{\theta}}(P_S^0))}{\mathrm{E}_S R_\lambda(Q_{\tilde{\theta}}(P_S^0)) } \rightarrow 1 \quad as \quad n\rightarrow \infty.$$
	When equation \eqref{convergence} does not hold, then $Q_{\hat{\theta}}(P_S^0)$ achieves the $\frac{\text{log}(n)}{\sqrt{n}}$ rate:
	$$\mathrm{E}_S R_\lambda(Q_{\hat{\theta}}(P_S^0)) = O\bigg(\frac{\text{log}(n)}{\sqrt{n}}\bigg).$$
	
	This section shows that the performance of SL classification rule using joint thresholding in Section 4 is asymptotically equivalent to the oracle under some conditions. 
	
	\bibliographystyle{apacite}
	\bibliography{references1}

	\newpage
	\section*{Sample Codes}
	\begin{verbatim}
	### FUNCTIONS
	
	# Install functions for threshold line search under the 
	# weighted misclassification loss from github
	install_github("yizhenxu/TVLT")
	library(TVLT)
	
	# SL with two-step minimization
	SL.twostep = function(Y.train, fit.data.SLL, lambda){
	alpha = fit.data.SLL$coef
	Psi = fit.data.SLL$Z %*% fit.data.SLL$coef
	opt = Opt.nonpar.rule(Y.train, Psi, phi=0, lambda)
	cutoff = as.numeric(opt)[1]
	return(list(alpha = alpha, cutoff = cutoff))
	}
	
	# SL with CRS minimization
	SL.CRS = function(Y.train, fit.data.SLL, lambda){
	
	X = fit.data.SLL$Z # cross-validated library predictions  
	mod1 = nnls(as.matrix(X), Y.train) #non-negative least squares regression 
	initCoef = coef(mod1)
	initCoef[is.na(initCoef)] = 0
	
	ord = order(initCoef,decreasing=TRUE) # order the columns by initCoef
	X.SL = X[,ord]
	initCoef = initCoef[ord]
	b1 = initCoef/initCoef[1] #initial b
	G1 = X.SL%*%b1
	cR1 = as.numeric(Opt.nonpar.rule(Y.train, G1, phi=0, lambda)[1]) #initial c
	r = range(G1)
	
	Rtest <- function(t, lambda, X, Y) { #t[1] is cutoff, t[-1] is alpha
	t = matrix(unlist(t),ncol=1) 
	G =  X%*%t[-1] 
	result = sum(lambda*(G<=t[1])*Y+(1-lambda)*(G>t[1])*(1-Y))
	return(result)
	}
	# objective function
	fn = function(t) Rtest(t, lambda, X.SL, Y.train) # prepare for CRS (c,alpha)
	# starting values
	x0=c(cR1,b1) 
	# search region
	low = c(r[1]-0.5,rep(0,length(b1)))
	upp = c(r[2]+0.5,rep(5,length(b1)))
	# CRS
	crssol = crs2lm(x0 , fn , lower=low, upper=upp,
	maxeval =  10000, pop.size = 100000*(length(x0)+1), ranseed = seed,
	xtol_rel = 1e-6, nl.info = FALSE)
	bcrs = crssol$par[-1]
	norm = sum(bcrs)
	#normalize to make coefficients sum up to one
	bcrs = bcrs/norm
	cutoff = crssol$par[1]/norm
	#rewind to original order
	alpha = bcrs[ord]
	return(list(alpha = alpha, cutoff = cutoff))
	}
	
	### ANALYSIS
	
	SL.library = c("SL.randomForest","SL.glmnet","SL.leekasso")
	fit.data.SLL = SuperLearner(Y=Y.train, X=W.train,newX=W.test, SL.library = SL.library, 
	family = binomial(), method = "method.NNLS",verbose = FALSE)
	
	lambda = 0.7 # user-specified penalty for false-negatives in the weighted loss
	
	# two-step minimization
	mod1 = SL.twostep(Y.train, fit.data.SLL, lambda)
	
	# CRS minimization
	mod2 = SL.CRS(Y.train, fit.data.SLL, lambda)
	\end{verbatim}
	
\end{document}